\documentclass{article} 
\usepackage[table]{xcolor}   
\usepackage{tencent_tech_report}
\usepackage[colorlinks = true,
            linkcolor = blue,
            urlcolor  = blue,
            citecolor = blue,
            anchorcolor = blue]{hyperref}

\usepackage{tcolorbox} 
\newtcolorbox{alprompt}[1]{
        boxrule = 1pt,
        fontupper = \small\tt,
        fonttitle = \bf\color{black},
        arc = 2pt,
        rounded corners,
        colframe = black,
        colbacktitle = white!97!yellow,
        colback = white!97!yellow,
        title = #1,
}

\usepackage{microtype}
\usepackage{hyperref}
\usepackage{url}
\usepackage{booktabs}
\usepackage{enumitem}
\usepackage{multicol}
\usepackage{multirow}
\usepackage{CJKutf8}
\usepackage{amsmath}
\usepackage{siunitx}
\usepackage{floatflt}
\usepackage{wrapfig}
\usepackage{graphicx}
\usepackage{booktabs}
\usepackage{authblk}
\usepackage{lipsum}

\usepackage{algorithm}
\usepackage{algorithmicx}
\usepackage{algpseudocode}
\usepackage{microtype}
\usepackage{graphicx}
\usepackage{multirow}
\usepackage{booktabs} 
\usepackage{pifont}  
\usepackage{graphicx}  
\usepackage{subcaption} 
\usepackage{hyperref}

\usepackage{amssymb} 

\algnewcommand{\LeftComment}[1]{\Statex \(\triangleright\) #1}

\usepackage{array}
\usepackage{amsmath}
\usepackage{amssymb}
\usepackage{mathtools}
\usepackage{amsthm}
\usepackage{arydshln}
\usepackage[capitalize,noabbrev]{cleveref}
\usepackage{adjustbox} 
\usepackage{enumitem}

\theoremstyle{plain}
\newtheorem{theorem}{Theorem}[section]

\newtheorem{lemma}[theorem]{Lemma}

\theoremstyle{definition}

\theoremstyle{remark}

\usepackage{makecell}

\definecolor{mygreen}{HTML}{E0FFE0} 
\definecolor{lightgray}{HTML}{EDEDED}
\definecolor{darkgreen}{rgb}{0.0, 0.5, 0.0} 

\setlength{\marginparwidth}{2cm}
\usepackage[textsize=tiny]{todonotes}

\definecolor{nred}{RGB}{196, 38, 11}
\definecolor{ngreen}{RGB}{18, 141, 21}
\definecolor{nblue}{RGB}{41, 52, 190}
\definecolor{norange}{RGB}{230, 106, 53}

\newcommand{\ignore}[1]{}

\sisetup{
  table-number-alignment = center,
  table-figures-integer = 2,
  table-figures-decimal = 1,
  table-figures-uncertainty = 1,
}

\usepackage{tcolorbox}
\tcbuselibrary{listings,skins}

\definecolor{promptbg}{RGB}{245, 245, 245}   
\definecolor{promptborder}{RGB}{200, 200, 200} 

\tcbset{
    promptstyle/.style={
        enhanced,
        frame hidden,
        boxrule=0pt,
        left=8pt,
        right=8pt,
        top=6pt,
        bottom=6pt,
        arc=3pt,
        colback=promptbg,
        colframe=promptborder,
        fontupper=\fontsize{0.7em}{0.7em}\selectfont\ttfamily\leftskip,
        before upper={\parindent}, 
        overlay unbroken and first={
            \draw[promptborder, line width=1pt] 
                (frame.south west) -- (frame.north west);
            \draw[promptborder, line width=1pt] 
                ([xshift=-2pt]frame.north east) -- (frame.south east);
        },
        overlay middle and last={
            \draw[promptborder, line width=1pt] 
                (frame.south west) -- (frame.north west);
            \draw[promptborder, line width=1pt]
                ([xshift=-2pt]frame.north east) -- (frame.south east);
        }
    }
}


\usepackage{wrapfig}
\usepackage{multirow}

\colmfinalcopy

\newcommand{\method}{{VISTA}}

\title{VISTA: Enhancing Vision-Text Alignment in MLLMs via Cross-Modal Mutual Information Maximization}

\author[ ]{Mingxiao Li$^{*}$}
\author[ ]{Na Su}
\author[ ]{Fang Qu}
\author[ ]{Zhizhou Zhong}
\author[ ]{Ziyang Chen}
\author[ ]{Yuan Li}
\author[ ]{\\Zhaopeng Tu\thanks{Correspondence to: Mingxiao Li \textless mingxiaoli@tencent.com\textgreater~and Zhaopeng Tu \textless zptu@tencent.com\textgreater.}}
\author[ ]{Xiaolong Li}

\affil[ ]{Hunyuan AI Digital Human, Tencent 
\protect\\[2pt] 
\url{https://github.com/Tencent/DigitalHuman/tree/main/VISTA}}

\begin{document}

\maketitle

\begin{figure*}[h]
\vspace{-12pt}
\begin{center}
\centerline{\includegraphics[width=1.0\textwidth]{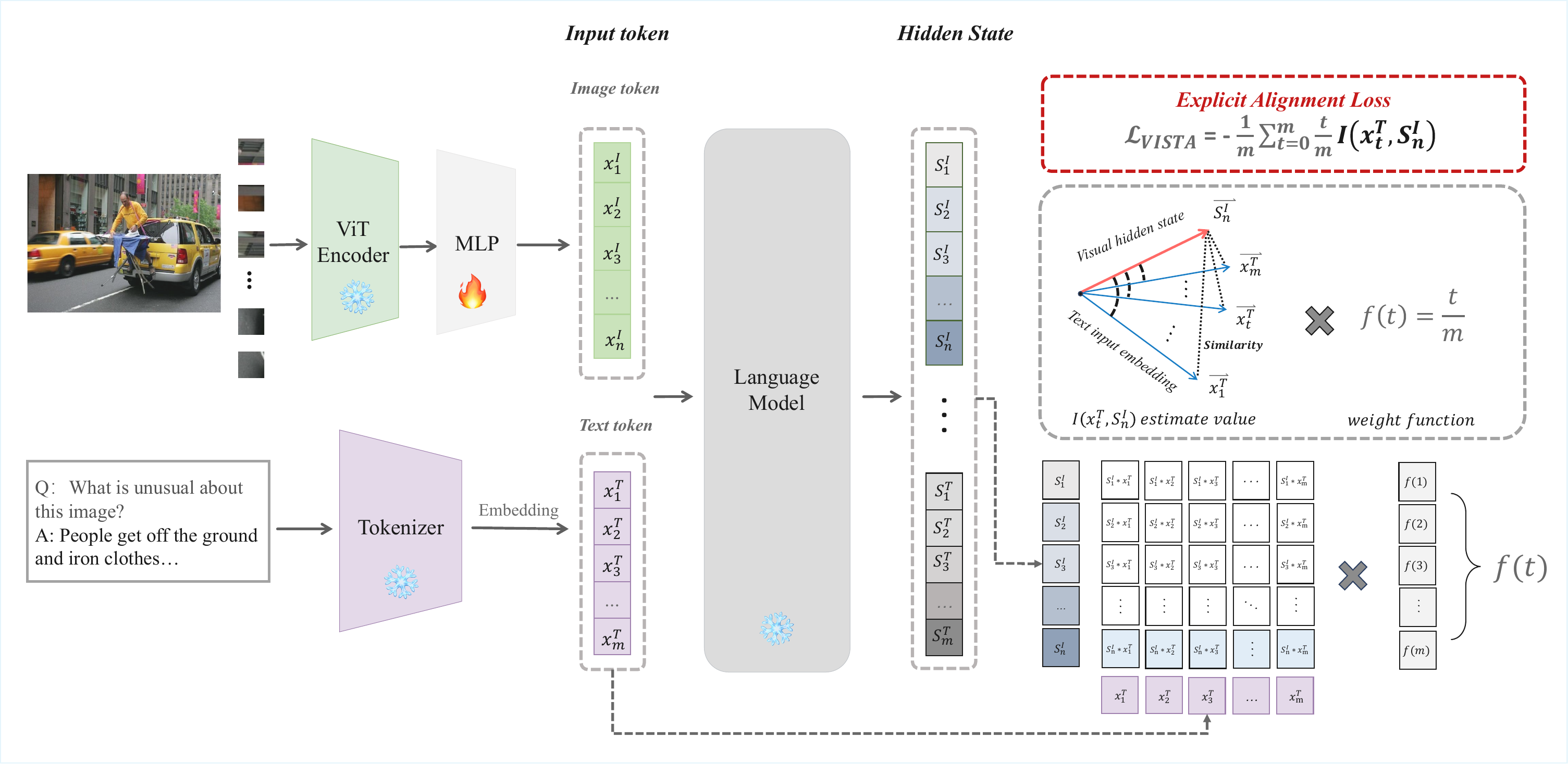}}
\caption{\textbf{\method{} framework}. We prevent target degradation caused by implicit alignment of the target by introducing the explicit alignment loss $\mathcal{L_{\text{VISTA}}}$of the text token and vision hidden state.}
\label{VISTA}
\end{center}
\end{figure*}

\begin{abstract}
Current multimodal large language models (MLLMs) face a critical challenge in modality alignment, often exhibiting a bias towards textual information at the expense of other modalities like vision. This paper conducts a systematic information-theoretic analysis of the widely used cross-entropy loss in MLLMs, uncovering its implicit alignment objective. Our theoretical investigation reveals that this implicit objective has inherent limitations, leading to a degradation of cross-modal alignment as text sequence length increases, thereby hindering effective multimodal information fusion. To overcome these drawbacks, we propose \textbf{Vis}ion-\textbf{T}ext \textbf{A}lignment (\method), a novel approach guided by our theoretical insights. \method{} introduces an explicit alignment objective designed to maximize cross-modal mutual information, preventing the degradation of visual alignment. 
Notably, \method{} enhances the visual understanding capabilities of existing MLLMs without requiring any additional trainable modules or extra training data, making it both efficient and practical. Our method significantly outperforms baseline models across more than a dozen benchmark datasets, including VQAv2, MMStar, and MME, paving the way for new directions in MLLM modal alignment research. 
\end{abstract}

\section{Introduction}

The rapid advancement of artificial intelligence has propelled expectations for AI assistants beyond traditional text-based interactions, driving demand for systems that align more closely with the inherently multimodal nature of human perception and communication. Multimodal Large Language Models (MLLMs) have emerged at the forefront of this pursuit, with architectures like LLaVA \citep{llava} pioneering a widely adopted paradigm. These models typically utilize a Multi-Layer Perceptron (MLP) to bridge pre-trained Vision Transformers (ViTs) and Large Language Models (LLMs), integrating visual and textual modalities. This ``Glue Structure'' approach, exemplified by influential models such as InternVL \citep{internvl1.5}, QwenVL \citep{qwenvl2}, and Kimi-VL \citep{kimivl}, facilitates rapid, cost-effective adaptation of the latest unimodal foundation models, enabling the swift development of new MLLMs and transcending the limitations of text-only systems to address complex multimodal problems.

Despite their architectural adaptability, prevalent MLLMs predominantly rely on standard cross-entropy loss applied to text generation for end-to-end training and alignment. This straightforward approach inherently prioritizes the accurate prediction of text tokens. Consequently, it often leads to two critical issues: (1) {\bf a bias towards textual information}, potentially diminishing the significance of visual context in semantic understanding, and (2) {\bf an implicit and often weak vision-text alignment mechanism} in text-only cross-entropy loss that lacks explicit constraints to enforce semantic consistency between visual and textual representations. Recent studies \citep{Opera, zhang2024redundancy,liu2024conflicts} corroborate this, highlighting that existing MLLMs frequently exhibit a modality imbalance, with their representational capacity heavily skewed towards textual information.

Several attempts have been made to mitigate this imbalance by incorporating forms of image-focused supervision. For instance, ROSS \citep{ross} introduces auxiliary learnable modules to reconstruct input images during alignment training, thereby enhancing fine-grained visual perception. Similarly, Metamorph \citep{Metamorph} appends a trainable MLP to reconstruct visual semantic vectors from the ViT, aiming to create a unified MLLM for both generation and comprehension. While these methods have shown improvements, they often introduce additional trainable parameters, require architectural modifications, or necessitate extra training data and stages, and may not directly address the core issue of {\bf implicit cross-modal semantic alignment}.

The limitations of current implicit alignment paradigms are starkly illustrated in Figure \ref{fig:llava}, where visualizations of image-text embedding similarity in models like LLaVA reveal considerable semantic noise between modalities. This underscores a fundamental weakness: {\bf the absence of a direct, explicit objective to align visual and textual representations in a shared semantic space}. In contrast, foundational models like CLIP \citep{clip} have demonstrated the power of explicit alignment by directly optimizing the correspondence between image and text embeddings. Recent work \citep{covert2024locality} further reinforces the potential of explicit alignment strategies by employing masked autoencoder-style post-training on ViTs to enhance local-region awareness within the visual-textual semantic space. This suggests that incorporating explicit alignment objectives is a promising avenue for overcoming the current limitations of MLLMs.

\begin{figure}[t]
    \centering
    \begin{minipage}[b]{0.3\textwidth}
        \centering
        \includegraphics[width=\textwidth]{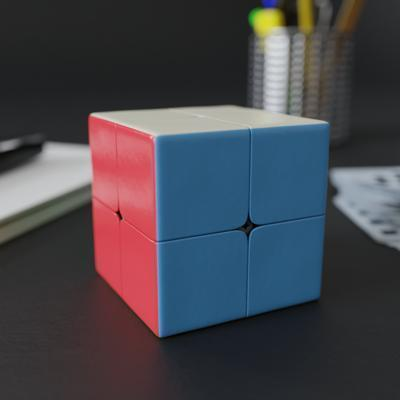}
        \subcaption{Origin}
    \end{minipage}
    \hfill
    \begin{minipage}[b]{0.3\textwidth}
        \centering 
        \includegraphics[width=\textwidth]{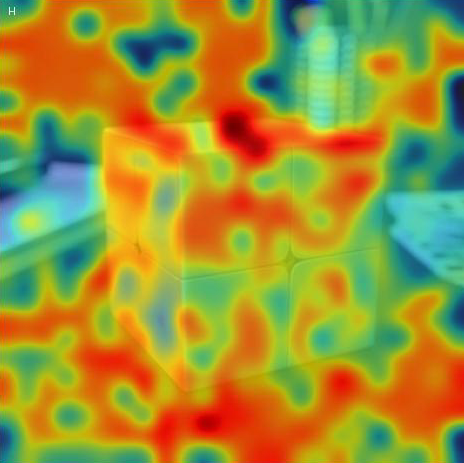}
        \subcaption{Vanilla LLaVA} 
        \label{fig:llava}
    \end{minipage}
    \hfill
    \begin{minipage}[b]{0.3\textwidth}
        \centering
        \includegraphics[width=\textwidth]{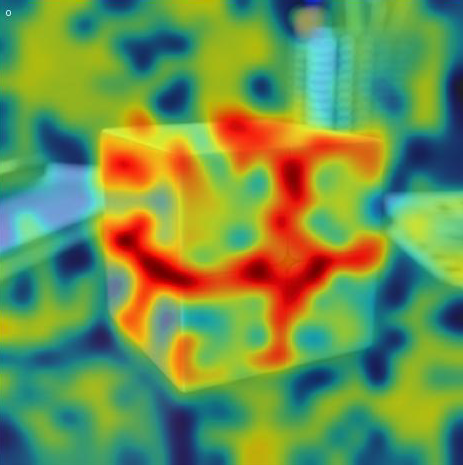}
        \subcaption{LLaVA-VISTA}
        \label{fig:vista}
    \end{minipage}
    \caption{Visualization of the semantic similarity between prompt text embeddings and image embeddings generated by the MLP. The prompt used is: \textit{"How do you solve this specific type of puzzle cube depicted in the image?"} Similarity is measured using cosine similarity with mean aggregation. Darker colors indicate higher semantic alignment.}
    \label{fig:similarity}
\end{figure}

Motivated by these observations, this paper revisits the conventional cross-entropy-based alignment framework from an information-theoretic perspective. Our analysis uncovers that while cross-entropy implicitly encourages alignment between image hidden states and corresponding text tokens, this implicit objective inherently degrades as the length of the text sequence increases, particularly in decoder-only architectures. The contribution of visual information to the overall learning objective diminishes, hindering robust multimodal fusion.

To address this fundamental limitation, we propose {\bf Vis}ion-{\bf T}ext {\bf A}lignment (\method), a novel approach that introduces an explicit alignment objective designed to maximize cross-modal mutual information. Specifically, \method{} incorporates an explicit alignment loss coupled with a weighting function. This combination ensures that the alignment objective maintains a consistent contribution to the overall optimization process, preventing its dilution as text length increases and thereby ensuring robust cross-modal alignment.
Crucially, \method{} enhances the visual understanding capabilities of existing MLLMs without requiring any additional trainable modules, architectural modifications, or extra training data, making it an efficient and practical solution. As illustrated in Figure \ref{fig:vista}, \method{} effectively fosters semantic coherence across modalities. 

We evaluate \method{} on two representative MLLM backbones, TinyLLaVA (3B) and LLaVA-v1.5 (7B), across over a dozen diverse benchmarks. These benchmarks encompass high-level Visual Question Answering (VQA), general multimodal reasoning, and fine-grained visual perception. Our approach consistently yields an average improvement exceeding 2\% on these benchmarks, without introducing additional parameters or requiring further tuning. Notably, \method{} achieves substantial gains on particularly challenging datasets—including RealWordQA (+2.8\%), MMStar (+7.17\%), and MME Cognition (+8.5\%)—underscoring its robustness and scalability across different model scales and vision-language tasks.

Our key contributions are as follows:
\begin{itemize}[leftmargin=12pt]
    \item We provide a novel information-theoretic analysis of the conventional cross-entropy loss in MLLMs, theoretically demonstrating how its implicit cross-modal alignment capability degrades with increasing text sequence length.
    \item We introduce Vision-Text Alignment (\method), a principled and lightweight method that directly maximizes cross-modal mutual information to enhance vision-language alignment, without requiring auxiliary trainable modules, architectural modifications, or additional training data.
    \item We extensively evaluate \method{} across more than a dozen diverse and representative benchmarks, showing significant and consistent performance improvements over strong MLLM baselines, thereby validating its practical effectiveness and efficiency.
\end{itemize}

\section{Related work}
\label{sec:related}

\textbf{Architectures of multimodal large language models.}  
The design of multimodal large language models (MLLMs) has evolved from early query-based cross-modal interaction frameworks, exemplified by MiniGPT-4~\cite{zhu2023minigpt} and BLIP-2~\cite{li2023blip2}, which utilize modules like Q-Former to mediate between visual and textual modalities. A paradigm shift occurred with LLaVA~\cite{llava}, introducing a streamlined encoder-MLP-LLM pipeline that projects frozen vision encoder (e.g., ViT~\cite{vit}) features into the LLM embedding space via a lightweight MLP. This architecture has become the prevailing standard, adopted by subsequent models such as InternVL~\cite{internvl1.5}, QwenVL~\cite{qwenvl2,qwenvl2-5}, and KimiVL~\cite{kimivl}. Although these methods input image tokens as prompts, they still optimize only the LLM's text-only cross-entropy loss. Thus, from the decoder's perspective, alignment remains implicit, with no explicit supervision linking image tokens to text.

\textbf{Challenges in vision-language alignment.} 
Recent works reveal fundamental limitations in current alignment strategies, particularly their failure to preserve detailed visual semantics within MLLMs. For instance, InternVL-1.5~\cite{internvl1.5} showed that scaling vision encoders from scratch significantly improves visual grounding, while InternVL-3~\cite{zhu2025internvl3} enhanced semantic integration by co-training ViT and LLM parameters. Auxiliary modules like Metamorph~\cite{Metamorph}, AIMV2~\cite{apple}, and Ross~\cite{ross} further encourage visual concept internalization or semantic consistency via MLP reconstruction, pixel alignment, or diffusion-based modules. Although empirically effective, these methods rely on task-specific modules, extensive end-to-end training, or architectural changes, lacking a unified theoretical framework and often overlooking the mismatch between MLLMs' instruction-following objectives and the semantic fidelity needed for robust visual understanding.

\textbf{Explicit alignment in multimodal pretraining.}  
Encoder-level explicit alignment has shown effectiveness in multimodal representation learning, exemplified by CLIP~\cite{clip}, which maximizes mutual information between image-text embeddings, and GRAM~\cite{GRAM}, which enforces geometric alignment by minimizing the Gram volume spanned by modality vectors, achieving state-of-the-art results on video-audio-text benchmarks. In contrast, MLLMs mainly depend on implicit alignment via autoregressive text generation, which remains insufficient to bridge the modality gap~\cite{Opera, lanp, li2025visuals}. Recent attempts to incorporate explicit alignment into MLLMs include Sugar~\cite{chow2024unified}, introducing a trainable MLP and interlaced image-text data but altering training data format, and a masked joint training approach~\citep{covert2024locality} optimizing encoder-level alignment through the text encoder. Despite progress, a principled, generalizable explicit alignment framework tailored for MLLMs remains an open challenge.

\section{Information-theoretic analysis of conventional cross-entropy loss}
\label{sec:background}

\subsection{Implicit cross-modal alignment via cross-entropy loss}

In multimodal large language models (MLLMs), the standard training objective is the autoregressive text cross-entropy loss conditioned on image tokens as prompts. Although this appears to align visual and textual modalities implicitly, the alignment is not explicitly enforced in the training target. In this section, we formalize this implicit alignment and analyze its limitations.

\paragraph{Notation system.}
Let $\mathbf{X}^I = \{x^I_1, \ldots, x^I_n\}$ denote the image token sequence and $\mathbf{X}^T = \{x^T_1, \ldots, x^T_m\}$ 
the text token sequence. At decoding step $t$, the model predicts $x^T_t$ conditioned on $\mathbf{X}^I$ and $x^T_{<t}$. 
The visual and textual hidden states are $\mathbf{S}^I = \{s^I_1, \ldots, s^I_n\}$ and $s^T_{<t}$ respectively, both in $\mathbb{R}^{d_v \times \cdot}$. The embedding dimension is $\mathcal{V}$. We denote entropy by $\mathcal{H}(\cdot)$ and mutual information by $\mathcal{I}(\cdot;\cdot)$.

\paragraph{The nature of cross-entropy.}
The multimodal alignment objective of the original Multimodal Large Language Model (MLLM) at step $t$ is the negative log-likelihood, is actually equivalent to  minimizing conditional entropy:
\begin{equation}
\mathcal{L}_{CE} = - \mathbb{E}\left[\log p(x^T_{\geq t} \mid \mathbf{X}^I, x^T_{<t})\right] \sim - \mathcal{H}(x^T_{\geq t} \mid \mathbf{X}^I, s^T_{<t}).
\end{equation}

Since decoder-only large language models (LLMs) generate tokens based on their internal hidden states rather than directly on raw input tokens, the conditional probability of the next text token can be equivalently expressed in terms of these hidden representations. Specifically, let $\mathbf{S}^I$ denote the visual hidden states extracted from the image input $\mathbf{X}^I$, and let $s^T_{<t}$ denote the textual hidden states corresponding to the previously generated tokens $x^T_{<t}$. Then, we have:
\begin{equation}
p(x^T_t \mid \mathbf{X}^I, x^T_{<t}) = q(x^T_t \mid \mathbf{S}^I, s^T_{<t}),
\end{equation}
where $q(\cdot)$ represents the model's output distribution conditioned on the hidden states. This formulation highlights that the prediction of the next token depends solely on the internal hidden representations of both modalities, providing a foundation for analyzing multimodal alignment at the hidden state level. Substituting this into the cross-entropy loss yields:
\begin{equation}
\label{eq:hidden}
\mathcal{L}_{CE} =  - \mathbb{E}\left[\log q(x^T_{\geq t} \mid \mathbf{S}^I, x^T_{<t})\right]=- \frac{1}{m} \sum_{t=1}^m \mathcal{H}(x^T_t \mid \mathbf{S}^I, s^T_{<t}).
\end{equation}

According to formula~\ref{eq:hidden},  Minimizing this conditional entropy is equivalent to maximizing the mutual information:
\begin{equation}
\max \mathcal{I}(x^T_t; \mathbf{S}^I, s^T_{<t}) = \max \left[ \mathcal{I}(x^T_t; \mathbf{S}^I) + \mathcal{I}(x^T_t; s^T_{<t} \mid \mathbf{S}^I) \right].
\end{equation}

Here, $\mathcal{I}(x^T_t; \mathbf{S}^I)$ reflects visual-textual alignment, while $\mathcal{I}(x^T_t; s^T_{<t} \mid \mathbf{S}^I)$ captures textual autoregressive dependencies. This decomposition reveals that the cross-entropy loss only implicitly encourages modality alignment.

\subsection{Vanish of visual contribution as the text length grows}

We now theoretically prove that the implicit alignment of cross-entropy objective increasingly favors textual context over visual information as generation proceeds.

\begin{lemma}[Autoregressive entropy growth]
\label{lem:entropy_growth}

For a non-degenerate autoregressive text model, the entropy of the text prefix grows at least linearly with length:

\begin{equation}
\mathcal{H}(x^T_{<t}) = \sum_{k=1}^{t-1} \mathcal{H}(x^T_k \mid x^T_{<k}) \geq t \Delta_H, 
\quad \text{where} \Delta_H = \min_{k} \mathcal{H}(x^T_k \mid x^T_{<k}) > 0
\end{equation}

\end{lemma}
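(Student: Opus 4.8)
The plan is to derive the claim directly from the chain rule for Shannon entropy together with the non-degeneracy hypothesis, so there is essentially one conceptual step followed by routine bookkeeping. First I would invoke the chain rule $\mathcal{H}(x^T_1,\dots,x^T_{t-1}) = \sum_{k=1}^{t-1}\mathcal{H}(x^T_k \mid x^T_{<k})$, which is exactly the first displayed equality in the statement and needs no assumption on the model: it holds for any joint distribution over a finite prefix. This already reduces the lemma to a termwise lower bound on each conditional entropy.

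Second, I would lower-bound every summand by the uniform constant $\Delta_H := \min_{k}\mathcal{H}(x^T_k \mid x^T_{<k})$, which yields $\mathcal{H}(x^T_{<t}) \ge (t-1)\,\Delta_H$. The bound as stated, $t\,\Delta_H$, then follows up to the harmless off-by-one that depends on whether one also counts the (degenerate) conditioning on a fixed start-of-sequence symbol; in any case the downstream argument only uses that the growth is linear in $t$, i.e. $(t-1)\Delta_H = \Theta(t)$, so I would simply note this and not belabor the constant.

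The step that carries the actual content — and the one I expect to be the main obstacle — is establishing $\Delta_H > 0$, i.e. that the minimum (or infimum, if $t$ is unbounded) of the per-step conditional entropies is strictly positive rather than merely nonnegative. Positivity at a single position $k$ is easy: $\mathcal{H}(x^T_k \mid x^T_{<k}) = \mathbb{E}_{x^T_{<k}}[\mathcal{H}(x^T_k \mid x^T_{<k}=\cdot)] = 0$ would force $p(\cdot \mid x^T_{<k})$ to be a point mass for almost every prefix, meaning the continuation is a deterministic function of its past — precisely the behavior that ``non-degenerate autoregressive model'' is meant to exclude. The care needed is to promote this pointwise positivity to a bound uniform in $k$. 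I would therefore make the non-degeneracy assumption quantitative: assume there is a constant $\delta > 0$ with $\mathcal{H}(x^T_k \mid x^T_{<k}) \ge \delta$ for every position $k$ (a natural reading, since a well-trained language model retains genuine next-token uncertainty at every step, and one can appeal to a minimum-temperature or label-smoothing-style floor on the softmax), and then set $\Delta_H = \inf_k \mathcal{H}(x^T_k \mid x^T_{<k}) \ge \delta > 0$. With that in place the summation above is legitimate and the lemma follows.
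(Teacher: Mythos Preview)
Your proposal is correct and follows essentially the same approach as the paper: apply the entropy chain rule, lower-bound each conditional term by $\Delta_H$ under the non-degeneracy assumption, sum to obtain $(t-1)\Delta_H$, and note the off-by-one is asymptotically irrelevant. The paper's appendix proof is nearly identical, including the explicit remark that $(t-1)\Delta_H \approx t\Delta_H$ for large $t$; your extra care about uniformity of the infimum is sound but goes slightly beyond what the paper itself spells out.
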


\begin{proof}
By the chain rule of entropy and non-degeneracy assumption, each conditional entropy term is lower bounded by $\Delta_H$. Summing over $t-1$ terms yields the linear growth.
\end{proof}

\begin{lemma}[Bounded visual mutual information]
\label{lem:visual_bound}

The mutual information between the visual hidden states and the generated token is upper bounded:

\begin{equation}
\mathcal{I}(x^T_t; \mathbf{S}^I) \leq \mathcal{H}(x^T_t) \leq C,
\end{equation}

where $C$ is a constant determined by the token entropy.
\end{lemma}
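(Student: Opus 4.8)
The plan is to establish the two inequalities separately, both of which follow from elementary properties of Shannon information for discrete random variables, so no machinery beyond the notation already set up is needed. First I would expand the mutual information through its standard identity,
\begin{equation}
\mathcal{I}(x^T_t; \mathbf{S}^I) = \mathcal{H}(x^T_t) - \mathcal{H}(x^T_t \mid \mathbf{S}^I).
\end{equation}
Since $x^T_t$ is a discrete random variable --- a token drawn from the model's finite vocabulary --- the conditional entropy $\mathcal{H}(x^T_t \mid \mathbf{S}^I)$ is non-negative, and this immediately yields the first bound $\mathcal{I}(x^T_t; \mathbf{S}^I) \leq \mathcal{H}(x^T_t)$.

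For the second inequality I would invoke the maximum-entropy property of the uniform distribution on a finite alphabet: if $x^T_t$ ranges over a vocabulary of size $\mathcal{V}$, then $\mathcal{H}(x^T_t) \leq \log \mathcal{V}$, so one may simply take $C = \log \mathcal{V}$. If one instead only assumes a uniform upper bound on the per-token marginal entropy of the data distribution, the same conclusion holds with $C$ equal to that bound; the crude choice $C = \log \mathcal{V}$ already suffices for everything that follows.

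The one point worth stressing, rather than a genuine obstacle, is what the constant $C$ does \emph{not} depend on: it is independent of the decoding step $t$ and of the text length $m$. Thus the visual term $\mathcal{I}(x^T_t; \mathbf{S}^I)$ is capped by a fixed quantity no matter how long the generated sequence becomes, which is precisely the asymmetry that, combined with the linear lower bound on $\mathcal{H}(x^T_{<t})$ from Lemma~\ref{lem:entropy_growth}, will drive the subsequent claim that the relative contribution of visual information to the cross-entropy objective vanishes as $t$ grows. The only modeling hypothesis in play is that the token alphabet is finite (or, more weakly, that the marginal token entropy is uniformly bounded), which holds for any standard MLLM, so I anticipate the proof to be short and routine.
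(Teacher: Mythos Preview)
Your proposal is correct and mirrors the paper's own proof almost verbatim: both expand $\mathcal{I}(x^T_t;\mathbf{S}^I)=\mathcal{H}(x^T_t)-\mathcal{H}(x^T_t\mid\mathbf{S}^I)$, drop the non-negative conditional entropy, and then bound $\mathcal{H}(x^T_t)$ by $\log|\mathcal{V}|$ using finiteness of the token vocabulary. Your additional remark that $C$ is independent of $t$ and $m$ is exactly the point the paper emphasizes as well.
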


\begin{proof}
By definition, $\mathcal{I}(x^T_t; \mathbf{S}^I) = \mathcal{H}(x^T_t) - \mathcal{H}(x^T_t \mid \mathbf{S}^I) \leq \mathcal{H}(x^T_t)$. Since $x^T_t$ is discrete with bounded vocabulary, its entropy is upper bounded by $C$.
\end{proof}

\begin{theorem}[Vanishing vision-text alignment contribution]
\label{thm:vanishing_contribution}

As the text length $t$ grows, the relative contribution of visual alignment to total mutual information vanishes:

\begin{equation}
\lim_{t \to \infty} \frac{\mathcal{I}(x^T_t; \mathbf{S}^I)}{\mathcal{I}(x^T_t; \mathbf{S}^I, s^T_{<t})} \leq \lim_{t \to \infty} \frac{C}{\mathcal{I}(x^T_t; \mathbf{S}^I, s^T_{<t})} =  0.
\end{equation}
\end{theorem}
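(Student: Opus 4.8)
The plan is to prove the statement by squeezing the ratio between $0$ and a quantity tending to $0$. The first inequality in the display is immediate from \cref{lem:visual_bound}: since $\mathcal{I}(x^T_t;\mathbf{S}^I)\le C$ and, by non-negativity of mutual information together with the chain-rule decomposition recorded above, the denominator $\mathcal{I}(x^T_t;\mathbf{S}^I,s^T_{<t})=\mathcal{I}(x^T_t;\mathbf{S}^I)+\mathcal{I}(x^T_t;s^T_{<t}\mid\mathbf{S}^I)$ is positive, replacing the numerator by the larger constant $C$ can only increase the fraction. It therefore remains to show $\mathcal{I}(x^T_t;\mathbf{S}^I,s^T_{<t})\to\infty$, after which the squeeze theorem delivers the limit $0$.

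For the divergence of the denominator I would work with the cumulative information that the autoregressive objective actually accumulates along the prefix rather than with a single isolated next-token term. Applying the chain rule of mutual information along the generated tokens, the textual component $\mathcal{I}(x^T_t;s^T_{<t}\mid\mathbf{S}^I)$ is, up to a bounded residual, the telescoping sum $\sum_{k<t}\bigl(\mathcal{H}(x^T_k\mid x^T_{<k})-\mathcal{H}(x^T_k\mid x^T_{<k},\mathbf{S}^I)\bigr)$, which collapses to $\mathcal{H}(x^T_{<t})$ minus an $O(1)$ term (the part of the prefix entropy already explained by the image). \cref{lem:entropy_growth} then forces this to be at least $t\Delta_H-O(1)$, and since $\Delta_H>0$ under the non-degeneracy assumption this diverges with $t$. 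Chaining the estimates gives $0\le \mathcal{I}(x^T_t;\mathbf{S}^I)/\mathcal{I}(x^T_t;\mathbf{S}^I,s^T_{<t})\le C/(t\Delta_H-O(1))\to 0$ as $t\to\infty$.

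The step I expect to be the main obstacle is precisely this lower bound on the denominator, because a literal reading of $\mathcal{I}(x^T_t;\mathbf{S}^I,s^T_{<t})$ as the mutual information between the single discrete token $x^T_t$ and its conditioning is itself capped by $\mathcal{H}(x^T_t)\le C$ and hence cannot diverge. The argument only goes through once we commit to interpreting the denominator as the accumulated alignment information over the growing prefix — equivalently, the information carried by the full hidden-state sequence $s^T_{<t}$, whose dimension grows with $t$ — which is what makes \cref{lem:entropy_growth} applicable; making this interpretation and the $O(1)$ residual precise is the delicate part. The remaining ingredients — non-negativity of mutual information, the chain rule, and the squeeze theorem — are routine.
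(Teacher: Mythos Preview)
Your squeeze architecture—cap the numerator by $C$ via \cref{lem:visual_bound} and argue the denominator diverges—is exactly the paper's. The divergence step differs: rather than your telescoping, the paper uses monotonicity $\mathcal{I}(x^T_t;s^T_{<t})\le\mathcal{I}(x^T_t;\mathbf{S}^I,s^T_{<t})$ together with the chain rule to obtain $\mathcal{I}(x^T_t;s^T_{<t}\mid\mathbf{S}^I)\ge\mathcal{I}(x^T_t;s^T_{<t})-C$, and then asserts $\mathcal{I}(x^T_t;s^T_{<t})\ge t\Delta_H-\epsilon$ by appeal to \cref{lem:entropy_growth}, yielding the linear lower bound $\mathcal{I}(x^T_t;\mathbf{S}^I,s^T_{<t})\ge t\Delta_H-\epsilon-C$ and hence the ratio bound $C/(t\Delta_H-\epsilon-C)\to 0$.

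Your telescoping route has its own gap: the sum $\sum_{k<t}\bigl(\mathcal{H}(x^T_k\mid x^T_{<k})-\mathcal{H}(x^T_k\mid x^T_{<k},\mathbf{S}^I)\bigr)$ equals $\mathcal{I}(x^T_{<t};\mathbf{S}^I)$ by the chain rule, not $\mathcal{I}(x^T_t;s^T_{<t}\mid\mathbf{S}^I)$, and nothing in the setup relates these two up to $O(1)$; nor does the sum collapse to $\mathcal{H}(x^T_{<t})-O(1)$ without the additional, unstated hypothesis $\mathcal{H}(x^T_{<t}\mid\mathbf{S}^I)=O(1)$.

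That said, the obstacle you flag in your last paragraph is genuine and applies equally to the paper's own argument: since $\mathcal{I}(x^T_t;s^T_{<t})\le\mathcal{H}(x^T_t)\le\log|\mathcal{V}|=C$, the paper's asserted bound $\mathcal{I}(x^T_t;s^T_{<t})\ge t\Delta_H-\epsilon$ contradicts the very constant introduced in \cref{lem:visual_bound} once $t$ is large. The paper does not resolve this; it simply cites \cref{lem:entropy_growth}, which concerns the entropy of the \emph{whole prefix} $\mathcal{H}(x^T_{<t})$, not the single-token mutual information $\mathcal{I}(x^T_t;s^T_{<t})$. Your instinct that the argument only goes through after reinterpreting the denominator as a cumulative prefix quantity is therefore a correct diagnosis of what the paper's proof is tacitly relying on rather than a departure from it.
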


\begin{proof}
From Lemma~\ref{lem:entropy_growth}, the textual mutual information grows at least linearly with $t$. From Lemma~\ref{lem:visual_bound}, the visual mutual information is upper bounded by $C$. Since:

\[
\mathcal{I}(x^T_t; \mathbf{S}^I, s^T_{<t}) = \mathcal{I}(x^T_t; \mathbf{S}^I) + \mathcal{I}(x^T_t; s^T_{<t} \mid \mathbf{S}^I),
\]

and $\mathcal{I}(x^T_t; s^T_{<t} \mid \mathbf{S}^I)$ grows unboundedly with $t$, the ratio tends to zero, see Appendix \ref{app:proofs}.
\end{proof}

This theorem formalizes that the implicit alignment
 objective increasingly favors textual context over
  visual information as generation proceeds, 
  leading to weakened cross-modal alignment. 
  Detailed proofs of the above lemmas and theorem are provided in Appendix~\ref{app:proofs}.

\section{VISTA: Enhancing vision-text alignment}
\label{sec:method}

In this section, we introduce \method{} based on the theoretic analysis in the above section, an explicit alignment framework that directly optimizes visual-textual mutual information to prevent alignment degradation.

\subsection{Enhancing cross-modal alignment via explicit regularization}

Building upon the theoretical foundation established in Theorem~\ref{thm:vanishing_contribution}, we identify a fundamental limitation inherent to the implicit alignment objective: as the proportion of textual tokens grows during training, the strength of vision-text alignment progressively weakens, thereby impairing effective cross-modal integration. To mitigate this issue, we introduce an \emph{explicit alignment} regularizer that directly enforces stronger correspondence between visual and textual representations.

Formally, we augment the conventional cross-entropy loss $\mathcal{L}_{\text{CE}}$ with an explicit alignment term, yielding the composite objective:
\begin{equation}
\mathcal{L}_{\text{Total}} = \mathcal{L}_{\text{CE}} + \mathcal{L}_{\text{VISTA}} = \mathcal{L}_{\text{CE}} - \frac{1}{m} \sum_{t=1}^m f(t) \cdot \mathcal{I}(x_t; \mathbf{S}^I),
\label{eq:explicit_alignment_loss_rewrite_refined}
\end{equation}
where $\mathcal{I}(x_t; \mathbf{S}^I)$ denotes the mutual information between the $t$-th textual token embedding $x_t$ and the visual hidden states $\mathbf{S}^I$, and $f(t)$ is a monotonically increasing weighting function over token positions designed to counteract the natural attenuation of alignment signals for later tokens. A rigorous theoretical justification for the choice of $f(t)$ is provided in Appendix~\ref{appendix:ft_proof}.

\subsection{Practical instantiation of vision-text alignment}

To operationalize the explicit alignment term in our autoregressive decoder-only architecture, we leverage the hierarchical visual hidden states \(\mathbf{S}^I = \{S^I_1, \ldots, S^I_n\}\), where the final state \(S^I_n\) acts as a global summary of the visual input. The decoder-only design ensures the last hidden state summarizes all previous tokens, representing the entire input.

We approximate the mutual information $\mathcal{I}(x_t; \mathbf{S}^I)$ by a tractable distance metric between the textual embedding $x_t$ and the aggregated visual representation $S^I_n$. Concretely, we adopt the squared Euclidean distance as a surrogate measure:
\begin{equation}
\mathcal{L}_{\text{VISTA}} = \frac{1}{m} \sum_{t=1}^m f(t) \cdot \| x_t - S^I_n \|_2^2.
\label{eq:vista_l2_loss_refined}
\end{equation}

An initial attempt with a linear weighting $f(t) = t$ effectively emphasizes later tokens but causes the alignment loss to scale approximately as $\frac{m+1}{2}$ with sequence length $m$, potentially overwhelming the overall objective and destabilizing training dynamics.

To ensure stable optimization, we normalize the weighting by the sequence length, setting $f(t) = \frac{t}{m}$. This normalization bounds the magnitude of the alignment term, maintaining a consistent contribution regardless of $m$:
\begin{equation}
\mathcal{L}_{\text{Total}} = \mathcal{L}_{\text{CE}} + \frac{1}{m} \sum_{t=1}^m \frac{t}{m} \cdot \| x_t - S^I_n \|_2^2.
\label{eq:final_total_loss_vista_refined}
\end{equation}

This principled design preserves training stability while effectively reinforcing cross-modal integration. We term this approach \textbf{Vision-Text Alignment} (\method), which robustly mitigates the degradation of modal alignment and ensures sustained synergy between visual and textual modalities throughout training. The overall framework is depicted in Figure~\ref{VISTA}.

\section{Experiments}
\label{sec:experiment}

\subsection{Experimental setup}

\textbf{Datasets.}  
To ensure a fair and rigorous evaluation, we follow the dataset configuration used in LLaVA-v1.5. For pre-training, a subset of the LAION/CC/SBU dataset filtered for balanced concept coverage and augmented with BLIP-generated captions is employed, consistent with LLaVA's setup. Instruction tuning utilizes a combination of COCO~\cite{lin2014microsoft}, GQA~\cite{hudson2019gqa}, OCR-VQA~\cite{mishra2019ocr}, TextVQA~\cite{singh2019towards}, and VisualGenome~\cite{krishna2017visual}, as in~\cite{liu2024improved}. Further dataset details can be found therein.

\textbf{Tasks and evaluation.}  
Building upon prior benchmarks, we conduct comprehensive evaluations across three categories: (1) high-level semantic VQA tasks, encompassing VQAv2~\cite{vqav2}, OK-VQA~\cite{marino2019ok}, GQA~\cite{hudson2019gqa}, TextVQA~\cite{singh2019towards}, RealWorldQA~\cite{grok15v}, and DocVQA~\cite{docvqa}; (2) general multimodal understanding benchmarks covering diverse scenarios, including MMBench~\cite{liu2025mmbench}, SEED~\cite{li2023seed}, AI2D~\cite{kembhavi2016diagram}, MMMU~\cite{yue2024mmmu}, MMTB~\cite{ying2024mmt}, OCR-VQA~\cite{mishra2019ocr}, and MMStar~\cite{mmstar}; and (3) fine-grained visual perception and retrieval tasks, evaluated on MME~\cite{fu2024mmecomprehensiveevaluationbenchmark} alongside RefCOCO~\cite{refcoco}, RefCOCO+~\cite{refcoco}, and RefCOCOg~\cite{refcocog}. All evaluations are conducted using the lmms-eval framework~\cite{zhang2024lmmsevalrealitycheckevaluation, lmms_eval2024}, as recommended by LLaVA, which integrates multiple metrics to provide a unified and thorough assessment. Detailed dataset and metric descriptions are provided in Appendix~\ref{appendix:benchmark}.

\textbf{Models.}  
To rigorously validate \method{}, we experiment with two MLLM backbones of distinct scales: TinyLLaVA~\cite{zhou2024TinyLLava} (3B parameters) and LLaVA-v1.5~\cite{liu2024improved} (7B parameters). Both baselines are trained and fine-tuned on identical datasets to ensure comparability. For TinyLLaVA, we adopt the phi-2~\cite{phi2} 3B model as the language backbone and SigLIP~\cite{siglip} as the vision encoder. This setup enables a thorough assessment of \method{}'s generalizability and efficacy across varying model capacities and architectures.

\textbf{Baselines and implementation.}  
Following the data and training protocols in~\cite{liu2024improved}, we compare TinyLLaVA-\method{} with TinyLLaVA, and LLaVA-v1.5-\method{} with LLaVA-v1.5. The same weighting function $f(t)$ is used for both scales without hyperparameter tuning. Detailed model and training settings are provided in the appendix. During pre-training and fine-tuning, we optimize a combined loss of \method{} alignment and standard cross-entropy.

\begin{table}
\centering
\caption{Comparison of VQA results between the \method{} and LLava series models.}
\label{main_vqa}

\begin{center}

\begin{small}
\begin{sc}
\scalebox{0.9}{
\begin{tabular}{lccccccc}
\toprule

\textbf{METHOD} & \textbf{VQA\textsuperscript{v2}}&\textbf{VQA\textsuperscript{ok}} & \textbf{GQA}  & \textbf{VQA\textsuperscript{T}} & \textbf{RWQA} & \textbf{DOCVQA\textsuperscript{}} &\textbf{$\Delta$} \\
\midrule
\textbf{\emph{TinyLLava-3B-Sig-Phi-2 }} & & & & & & & \\ 
TinyLLava &79.13  & \multicolumn{1}{c}{57.50} & \multicolumn{1}{c}{61.20}  & \multicolumn{1}{c}{51.66} & \multicolumn{1}{c}{53.33} & \multicolumn{1}{c}{28.03} \\

\rowcolor[HTML]{ededed} 
TinyLLava-\method(cosine)& 
\text{79.10} & 
\multicolumn{1}{c}{\cellcolor{mygreen}\textbf{58.30}} &
\multicolumn{1}{c}{\cellcolor{mygreen}\text{  61.67 }} &  
\multicolumn{1}{c}{\cellcolor{mygreen}\textbf{51.89 }} &
\multicolumn{1}{c}{\cellcolor{mygreen}\text{ 53.59 }} & 
\multicolumn{1}{c}{\cellcolor{mygreen}\text{28.58 }} &
\cellcolor{white}\textbf{\textcolor{darkgreen}{$\uparrow$0.98\%}}\\
\rowcolor[HTML]{ededed} 
TinyLLava-\method&
\textbf{\cellcolor{mygreen}79.20 } & 
\multicolumn{1}{c}{\cellcolor{mygreen}\text{57.67}} & 
\multicolumn{1}{c}{\cellcolor{mygreen}\textbf{  61.79 }} & 
\multicolumn{1}{c}{\cellcolor{mygreen}\textbf{51.89 }} & 
\multicolumn{1}{c}{\cellcolor{mygreen}\textbf{ 55.42 }} & 
\multicolumn{1}{c}{\cellcolor{mygreen}\textbf{28.79 }} &
\cellcolor{white}\textbf{\textcolor{darkgreen}{$\uparrow$1.33\%}}
\\
\midrule

\textbf{\emph{LLava-v1.5-7B}}& & & & & & & \\ 
LLava& 78.50 & \multicolumn{1}{c}{53.44} & \multicolumn{1}{c}{62.00} & \multicolumn{1}{c}{46.07} & \multicolumn{1}{c}{55.82} & \multicolumn{1}{c}{21.49} \\

\rowcolor[HTML]{ededed} 
LLava-\method(cosine)& 
\cellcolor{mygreen}\text{78.98}& 
\cellcolor{mygreen}\textbf{56.30} &
\cellcolor{mygreen}\textbf{62.90} &
\cellcolor{mygreen}\textbf{46.77} &
\cellcolor{mygreen}\text{56.34} & 
\cellcolor{mygreen}\text{21.87} &
\cellcolor{white}\textbf{\textcolor{darkgreen}{$\uparrow$1.94\%}} \\
\rowcolor[HTML]{ededed} 
LLava-\method& 
\cellcolor{mygreen}\textbf{79.06}& 
\cellcolor{mygreen}\text{56.25} & 
\cellcolor{mygreen}\text{62.55} & 
\cellcolor{mygreen}\text{46.63} & 
\cellcolor{mygreen}\textbf{57.39} & 
\cellcolor{mygreen}\textbf{22.66} &
\cellcolor{white}\textbf{\textcolor{darkgreen}{$\uparrow$2.72\%}} \\
\bottomrule
\end{tabular}}
\end{sc}
\end{small}
\end{center}
\end{table}

\begin{table}
\centering
\caption{Comparison of the general visual task results between the \method{} and llava series models.}
\label{main_general}
\begin{center}
\begin{small}
\begin{sc}
\scalebox{0.75}{
\begin{tabular}{lccccccccccc@{}}
\toprule

\multirow{2}{*}{\textbf{METHOD}}  &
\multirow{2}{*}{\textbf{SEEDB\textsuperscript{I}}}& 
\multirow{2}{*}{\textbf{AI2D}} & 
\multirow{2}{*}{\textbf{MMMU\textsuperscript{}}}&
\multirow{2}{*}{\textbf{MMStar}} & 
\multirow{2}{*}{\textbf{MMTB}} & 
\multirow{2}{*}{\textbf{OCRB}} & 
\multicolumn{2}{c}{\textbf{MMBench}} &
\multirow{2}{*}{\textbf{$\Delta$}}\\ 
\cmidrule(lr){8-9}
& && & & & &    \textbf{en} & \textbf{cn} \\ 
\midrule
\textbf{\emph{TinyLLava-3B-Sig-Phi-2}} & && & & & & & & &\\ 
TinyLLava  & 
69.01&
60.36 &
36.33 &
37.19 &
48.73 &
337 &
67.04 & 
42.37&
\cellcolor{white}\text{} \\
\rowcolor
[HTML]{EDEDED}
TinyLLava-\method{} (cosine) & 
68.88& 
\text{59.55 } & 
\cellcolor{mygreen}\text{36.89} & 
\cellcolor{mygreen}\text{37.65 } & 
\cellcolor{mygreen}\text{48.77} &
334 & 
\cellcolor{mygreen}\textbf{68.10 }&
\cellcolor{mygreen}\text{42.60}  &
\cellcolor{white}\textbf{\textcolor{darkgreen}{$\uparrow$0.22\%}} \\

\rowcolor
[HTML]{EDEDED}
TinyLLava-\method{}  & 
\cellcolor{mygreen}\textbf{69.37}& 
\text{59.84 } & 
\cellcolor{mygreen}\textbf{37.89} & 
\cellcolor{mygreen}\textbf{39.91 } & 
\cellcolor{mygreen}\textbf{48.86} & 
\cellcolor{mygreen}\textbf{338} & 
\cellcolor{mygreen}\text{ 66.53}& 
\cellcolor{mygreen}\textbf{43.89}  &
\cellcolor{white}\textbf{\textcolor{darkgreen}{$\uparrow$1.82\%}} \\

\midrule
\textbf{\emph{LLava-v1.5-7B}}  & & & & & & & && \\ 
LLava &
66.17 &  
54.80 &
36.44 & 
33.48 &
48.86 &
313 & 
64.30 &
58.30 &
\cellcolor{white}\text{} \\

\rowcolor
[HTML]{EDEDED}
LLava-\method(cosine) & 
\cellcolor{mygreen}66.43&  
\cellcolor{mygreen}\text{56.28}   &
35.88 & 
\cellcolor{mygreen}\textbf{35.91}  & 
\cellcolor{mygreen}\text{48.06} &  
\cellcolor{mygreen}\text{315} & 
\cellcolor{mygreen}\textbf{66.08} &
57.85  &
\cellcolor{white}\textbf{\textcolor{darkgreen}{$\uparrow$1.27\%}} \\
\rowcolor
[HTML]{EDEDED}
LLava-\method{} &
\cellcolor{mygreen}66.68 &   
\cellcolor{mygreen}\textbf{56.70}   &  
\cellcolor{mygreen}\textbf{37.22}  & 
\cellcolor{mygreen}\text{35.88}  & 
\cellcolor{mygreen}\textbf{48.86}  & 
\cellcolor{mygreen}\textbf{321} &
\cellcolor{mygreen}\text{65.36} &
57.29 &
\cellcolor{white}\textbf{\textcolor{darkgreen}{$\uparrow$1.97\%}} \\
\bottomrule
\end{tabular}}
\end{sc}
\end{small}
\end{center}
\end{table}

\begin{table}[t]
\centering
\caption{Comparison of low level visual tasks results between the \method{} and llava series models.}
\label{main_low_level}
    \begin{center}
\begin{small}
\begin{sc}
\scalebox{0.75}{
\begin{tabular}{@{}p{4.3cm}ccccccc@{}}
\toprule
\multirow{2}{*}{\raggedright\textbf{METHOD}} & \multicolumn{2}{c}{\textbf{MME}} & \multicolumn{3}{c}{\textbf{RefCoco}} & \multirow{2}{*}{\textbf{$\Delta$}} \\
 \cmidrule(lr){2-3}  \cmidrule(lr){4-6} 
&\textbf{PERCEPTION}  & \textbf{COGNITION}  & \textbf{REFCOCO} & \textbf{REFCOCO+} & \textbf{REFCOCOG}  \\ 
\midrule

\textbf{\emph{TinyLLava-3B}} & & & & & &  \\ 
TinyLLava & 1457.09 & 329.64 & 28.67 & 28.39 & 59.54 & \\
\rowcolor[HTML]{ededed} 
TinyLLava-\method(Cosine)  & 
\cellcolor{mygreen}\text{1507.09} & 
\cellcolor{mygreen}\textbf{358.92} & 
\cellcolor{mygreen}\textbf{29.96} & 
\cellcolor{mygreen}\textbf{29.21} & 
\text{58.21} &  
\cellcolor{white}\textbf{$\Delta$\textcolor{darkgreen}{$\uparrow$2.55\%}} \\ 
\rowcolor[HTML]{ededed} 
TinyLLava-\method{}  & 
\cellcolor{mygreen}\textbf{1517.72} & 
\cellcolor{mygreen}\text{335.35} & 
\text{28.15} & 
\text{28.31} & 
\cellcolor{mygreen}\textbf{62.94} &  
\cellcolor{white}\textbf{$\Delta$\textcolor{darkgreen}{$\uparrow$3.03\%}} \\
\midrule
\textbf{\emph{LLava-v1.5-7B}} & & & & & &\\ 
LLava & 1510.72 & 348.20 & 29.76 & 28.92 & 57.76& \\
\rowcolor[HTML]{ededed} 
LLava-\method(cosine)  &
\cellcolor{mygreen}\textbf{1514.16} & 
\cellcolor{mygreen}\text{361.07} &
\cellcolor{mygreen}\textbf{31.81} &  
\cellcolor{mygreen}\text{31.24}& 
\text{56.37} & 
\cellcolor{white}\textbf{$\Delta$\textcolor{darkgreen}{$\uparrow$1.74\%}} \\
\rowcolor[HTML]{ededed} 
LLava-\method{} & 
\text{1506.70} & 
\cellcolor{mygreen}\textbf{377.85} & 
\cellcolor{mygreen}\text{32.17} &  
\cellcolor{mygreen}\textbf{31.54}& 
\text{57.47} & 
\cellcolor{white}\textbf{$\Delta$\textcolor{darkgreen}{$\uparrow$2.74\%}} \\

\bottomrule
\end{tabular}
}
\end{sc}
\end{small}
\end{center}
\end{table}

\subsection{Experimental results}

The results, summarized in Tables \ref{main_vqa}, \ref{main_general}, and \ref{main_low_level}, demonstrate that \method{} consistently enhances multimodal understanding across different model scales and task types without introducing additional trainable parameters or requiring extra data, aligning with our core contributions.

\paragraph{\method{} consistently improves performance on high-level semantic VQA tasks, validating its effectiveness in enhancing vision-text alignment for complex reasoning.}
As shown in Table~\ref{main_vqa}, integrating our proposed \method{} alignment objective into both TinyLLaVA-3B and LLaVA-v1.5-7B models yields significant performance gains over their original counterparts on high-level VQA datasets. For example, LLaVA-v1.5-\method{} achieves an average improvement of +2.72\% over its baseline, with strong individual gains observed on RealWorldQA (+1.57 points, from 55.82 to 57.39) and DocVQA (+1.17 points, from 21.49 to 22.66). Similarly, TinyLLaVA-\method{} shows an average improvement of +1.33\%. These improvements demonstrate that VISTA's explicit cross-modal mutual information maximization, as theorized, effectively strengthens the model's ability to integrate visual information for complex VQA tasks. This is achieved efficiently, without requiring additional trainable parameters, data, or architectural changes, directly aligning with our contribution of a lightweight yet powerful alignment method.

\paragraph{\method{} enhances general multimodal understanding across diverse reasoning tasks and model scales, demonstrating its broad applicability and scalability.}
In Table~\ref{main_general}, VISTA consistently improves performance on general vision-language understanding tasks. The TinyLLaVA-\method{} model achieves an average improvement of +1.82\% across these tasks, with a notable +2.72 point increase (a +7.31\% relative gain) on the challenging MMStar benchmark (from 37.19 to 39.91). Similarly, LLaVA-v1.5-\method{} shows an average improvement of +1.97\%, also with a significant gain on MMStar by +2.40 points (a +7.17\% relative gain, from 33.48 to 35.88) and on AI2D by +1.90 points (from 54.80 to 56.70). These results confirm that the explicit alignment fostered by VISTA generalizes effectively across various domains—from complex benchmarks like MMStar to diagram understanding in AI2D—and scales robustly from the 3B TinyLLaVA to the 7B LLaVA-v1.5. This substantiates VISTA's utility as a scalable and architecture-agnostic enhancement for multimodal alignment, aligning with our third key contribution.

\paragraph{\method{} leads to substantial improvements in fine-grained visual perception and comprehension, particularly in cognitively demanding tasks.}
Table~\ref{main_low_level} demonstrates \method{}'s effectiveness in fine-grained visual tasks assessed by MME and referring expression grounding. For LLaVA-v1.5-\method{}, the MME Cognition subscore notably increases from 348.20 to 377.85 (+29.65 points, an +8.52\% relative gain), highlighting \method{}'s significant contribution to complex visual understanding and reasoning, as claimed in the abstract. Similarly, TinyLLaVA-\method{} improves on the RefCOCOg benchmark by +3.4 points (from 59.54 to 62.94), showing that the proposed alignment objective also reinforces spatial and referential grounding abilities. These gains clearly indicate that \method{} not only addresses high-level reasoning challenges but also enhances models' sensitivity to localized visual semantics, aligning with the goal of improving overall visual understanding by preventing the degradation of visual alignment.

\paragraph{The mutual information-based alignment in \method{} is consistently beneficial across both lightweight and large MLLM architectures.}  
Across all tasks—high-level VQA, general reasoning, and fine-grained perception—\method{} delivers consistent performance upgrades from TinyLLaVA (3B) to LLaVA-v1.5 (7B), with average increases ranging from +1.3\% to +2.7\%. Importantly, both cosine contrastive losses and mutual information-based variants lead to improvements, with the latter generally achieving higher gains. Notably, these enhancements are achieved without architecture modifications, additional trainable parameters, or extra training stages. This efficiency aligns with the paper's highlighted contribution: \method{} is a principled, lightweight, and practically deployable strategy for improving cross-modal alignment.

\paragraph{The choice of mutual information estimator impacts VISTA's efficacy, with L2 distance outperforming cosine similarity due to more aligned optimization dynamics.}
We also conducted an ablation study using the cosine similarity function to estimate $\mathcal{I}(x_t^T, S^I_n)$, with the results summarized in Tables \ref{main_vqa}--\ref{main_low_level} (rows labeled ``\method{}(cosine)''). 
The results indicate that the cosine-based estimator still improves the 3B/7B backbones by +1.25\% / +1.65\% on average, yet consistently trails the L2 version.  The reason is that the cosine loss grows in magnitude (negative sign) as alignment improves, potentially conflicting with the decreasing cross-entropy objective, whereas the L2 distance decreases monotonically in tandem.  This analysis offers a concrete design guideline for future MI-based alignment methods.

\paragraph{Observed performance variations on specific benchmarks highlight \method{}'s primary strength in visual-textual alignment.}
\method{} primarily excels at aligning visual and textual information. While generally beneficial, it sometimes causes slight performance drops on benchmarks like AI2D and MMBench (cn), which depend more on text understanding than visual details. This suggests \method{} boosts visual-text feature alignment rather than just text skills, a point reinforced by strong results on vision-heavy benchmarks like MMStar.

On RefCOCO benchmarks, performance varies. For instance, ``TinyLLaVA-\method{}'' improved with longer text queries (RefCOCOG) but dipped with shorter ones (RefCOCO/RefCOCO+), while ``LLaVA-\method{}'' showed the reverse. This is likely because \method{} doesn't specifically train to balance performance across different text query lengths, and models might adaptively focus on easier improvements based on their size. A similar varied pattern on RefCOCO with ``TinyLLaVA-\method{}(cosine)'' supports this.

\subsection{Case study}

\begin{figure*}[ht]
\begin{center}
\begin{tabular}{cc}
\includegraphics[width=0.42\textwidth]{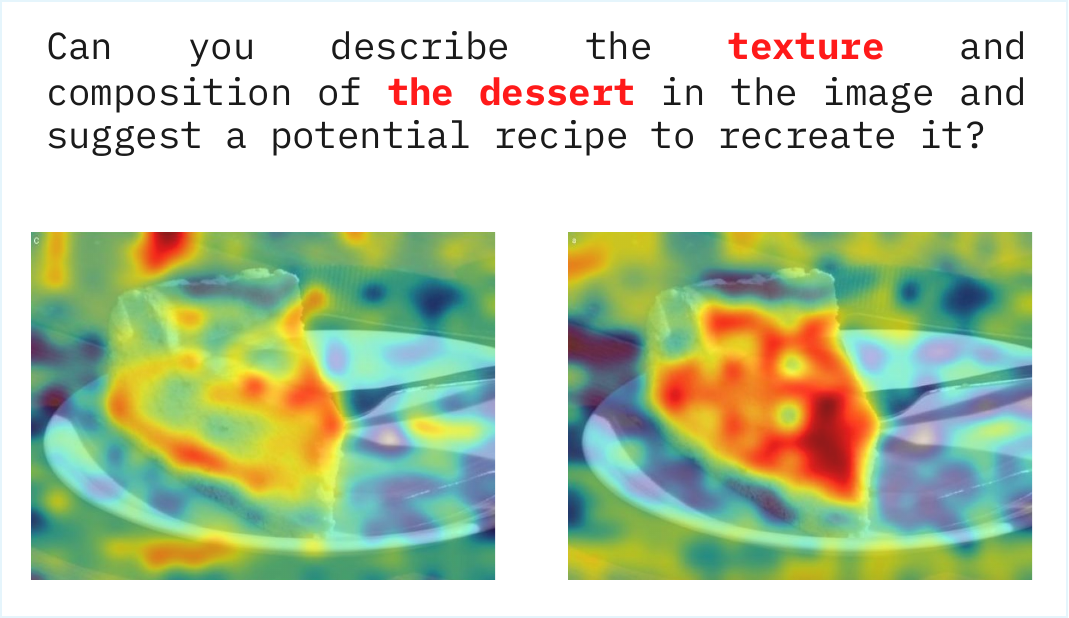} & 
\includegraphics[width=0.45\textwidth]{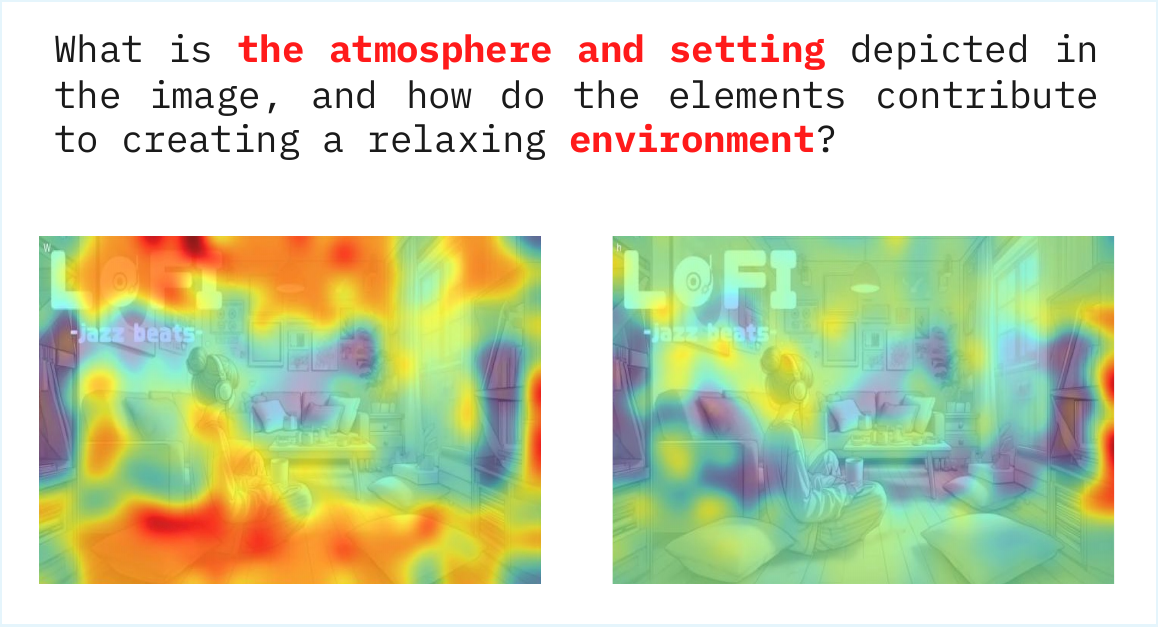} \\[6pt]
(a) Object  & (b) Enviroment \\[12pt]
\includegraphics[width=0.42\textwidth]{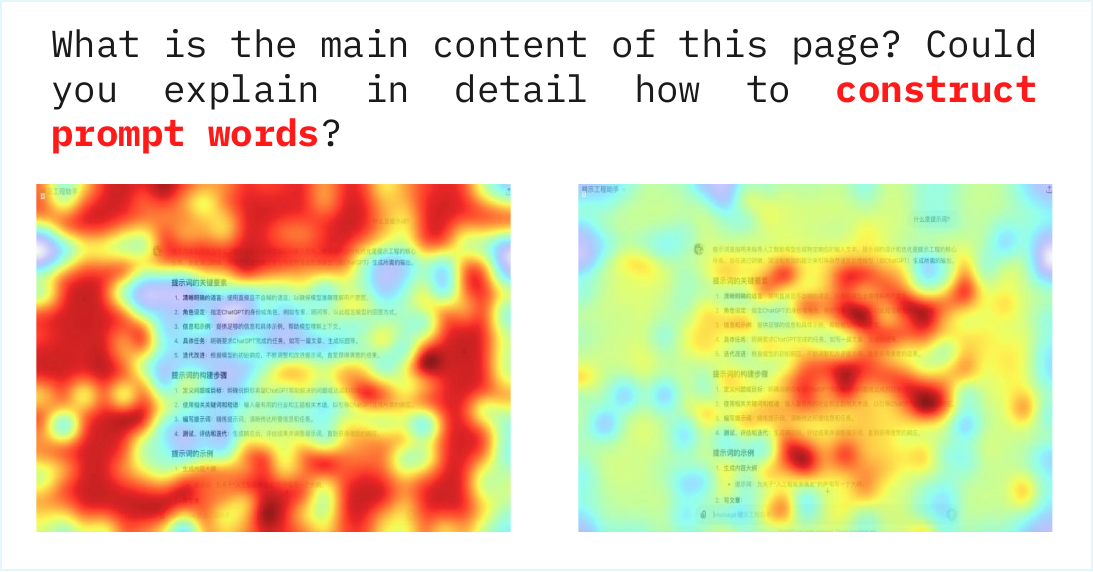} & 
\includegraphics[width=0.45 \textwidth]{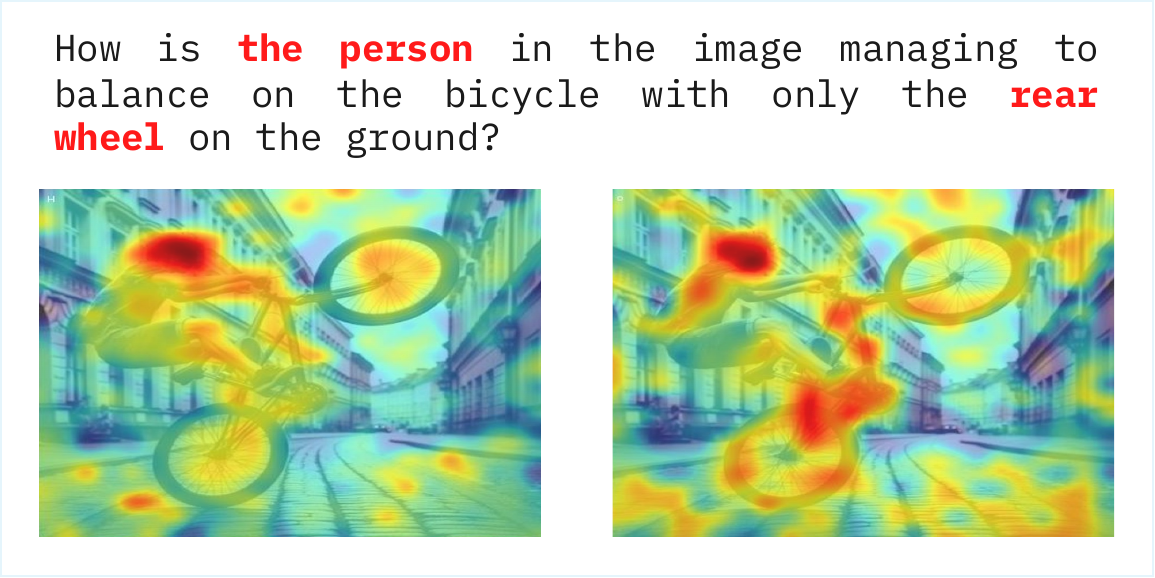} \\[6pt]
(c) Text & (d) Character
\end{tabular}
\caption{Semantic similarity visualization between MLP-generated prompt embeddings and image embeddings on LLaVA Wild Bench\cite{llava}. Each subfigure presents two images per case: left—LLaVA baseline; right—our \method{} method. Darker colors indicate higher semantic alignment.}
\label{fig:cases}
\end{center}
\end{figure*}

We present several illustrative cases in Figure~\ref{fig:cases} to highlight the qualitative advantages of our \method{} method over the LLaVA baseline in semantic alignment. Without relying on extra annotations, \method{} naturally learns more precise and coherent cross-modal correspondences across diverse scenarios, including objects, environments, text, and characters.

Notably, \method{} demonstrates more focused alignment on entire objects (a) and produces spatially balanced semantic maps for environments (b), closely matching human perception.  In the text case (c), unlike LLaVA's diffuse background alignment, \method{} precisely localizes semantics to the relevant text regions.  Similarly, in (d), it distinctly associates both the person and bicycle mentioned in the prompt, whereas LLaVA's alignment is more scattered.  These examples collectively validate \method's superior capability to achieve fine-grained and semantically meaningful vision-language alignment.


\section{Conclusion}


In this work, we addressed the critical challenge of suboptimal vision-language alignment in MLLMs, which often stems from the limitations of conventional text-based cross-entropy loss. Our information-theoretic analysis revealed that the implicit alignment fostered by this loss degrades as text length increases, diminishing the effective integration of visual information. In response to this problem, we propose a novel method \method{} that introduces an explicit objective to maximize cross-modal mutual information. A key advantage of \method{} is its simplicity and efficiency, enhancing alignment without necessitating additional parameters, architectural changes, or supplementary training data.

Our extensive experimental results across a wide array of benchmarks robustly demonstrate \method's effectiveness, yielding significant performance improvements in visual understanding and reasoning tasks. These findings underscore the benefits of explicit alignment objectives and suggest that \method{} offers a practical and scalable approach to building more capable and balanced MLLMs. We believe this work not only provides a valuable tool for current MLLMs development but also opens promising avenues for future research into principled and efficient cross-modal alignment techniques.

\newpage
\bibliographystyle{plainnat}
\bibliography{main.bbl}

\appendix

\section{Detailed Proofs}
\label{app:proofs}

\subsection{Proof of Lemma~\ref{lem:entropy_growth} (Autoregressive Entropy Growth)}

\begin{proof}
We consider the entropy of the text prefix up to step \(t-1\), denoted as \(x^T_{<t} = (x^T_1, x^T_2, \ldots, x^T_{t-1})\). By the chain rule of entropy, we have

\[
\mathcal{H}(x^T_{<t}) = \sum_{k=1}^{t-1} \mathcal{H}(x^T_k \mid x^T_{<k}),
\]

where \(x^T_{<k} = (x^T_1, \ldots, x^T_{k-1})\).

Under the non-degeneracy assumption on the data distribution, each conditional entropy term is strictly positive and uniformly lower bounded by some constant \(\Delta_H > 0\), i.e.,

\[
\mathcal{H}(x^T_k \mid x^T_{<k}) \geq \Delta_H, \quad \forall k \in \{1, \ldots, t-1\}.
\]

This assumption excludes trivial or deterministic sequences where the next token is fully determined by the prefix.

Combining these, we obtain

\[
\mathcal{H}(x^T_{<t}) = \sum_{k=1}^{t-1} \mathcal{H}(x^T_k \mid x^T_{<k}) \geq \sum_{k=1}^{t-1} \Delta_H = (t-1) \Delta_H.
\]

As \(t\) grows large, the difference between \(t\) and \(t-1\) becomes negligible, so we can write

\[
\mathcal{H}(x^T_{<t}) \geq (t-1) \Delta_H \approx t \Delta_H,
\]

which establishes that the entropy of the text prefix grows at least linearly with the sequence length \(t\).

\end{proof}

\subsection{Proof of Lemma~\ref{lem:visual_bound} (Bounded Visual Mutual Information)}

\begin{proof}
By the definition of mutual information,

\[
\mathcal{I}(x^T_t; \mathbf{S}^I) = \mathcal{H}(x^T_t) - \mathcal{H}(x^T_t \mid \mathbf{S}^I).
\]

Since conditional entropy is non-negative, it follows that

\[
\mathcal{I}(x^T_t; \mathbf{S}^I) \leq \mathcal{H}(x^T_t).
\]

Because \(x^T_t\) is a discrete token drawn from a finite vocabulary \(\mathcal{V}\), its entropy is upper bounded by the logarithm of the vocabulary size:

\[
\mathcal{H}(x^T_t) \leq \log |\mathcal{V}| = C,
\]

where \(C\) is a constant independent of the token position \(t\).

Therefore,

\[
\mathcal{I}(x^T_t; \mathbf{S}^I) \leq C,
\]

which means the mutual information between the visual signals and any single token is uniformly bounded by a constant.

\end{proof}

\subsection{Proof of Theorem~\ref{thm:vanishing_contribution} (Vanishing Visual Alignment Contribution)}

\begin{proof}
We analyze the mutual information between the current text token \(x^T_t\) and the combined modalities: the visual signals \(\mathbf{S}^I\) and the preceding text tokens \(s^T_{<t}\).

By the chain rule of mutual information, we have

\[
\mathcal{I}(x^T_t; \mathbf{S}^I, s^T_{<t}) = \mathcal{I}(x^T_t; \mathbf{S}^I) + \mathcal{I}(x^T_t; s^T_{<t} \mid \mathbf{S}^I).
\]

First, from Lemma~\ref{lem:visual_bound}, the mutual information between the visual signals and any single token is upper bounded by a constant:

\[
\mathcal{I}(x^T_t; \mathbf{S}^I) \leq C,
\]

where \(C = \log |\mathcal{V}|\) is the logarithm of the vocabulary size, independent of the time step \(t\).

Next, consider the conditional mutual information term \(\mathcal{I}(x^T_t; s^T_{<t} \mid \mathbf{S}^I)\). Using the non-negativity of mutual information and the chain rule, we have

\[
\mathcal{I}(x^T_t; s^T_{<t} \mid \mathbf{S}^I) = \mathcal{I}(x^T_t; \mathbf{S}^I, s^T_{<t}) - \mathcal{I}(x^T_t; \mathbf{S}^I) \geq \mathcal{I}(x^T_t; s^T_{<t}) - \mathcal{I}(x^T_t; \mathbf{S}^I).
\]

Here, we used the monotonicity of mutual information:

\[
\mathcal{I}(x^T_t; s^T_{<t}) \leq \mathcal{I}(x^T_t; \mathbf{S}^I, s^T_{<t}).
\]

From Lemma~\ref{lem:entropy_growth}, the mutual information between the current token and the preceding text context grows at least linearly with \(t\):

\[
\mathcal{I}(x^T_t; s^T_{<t}) = \mathcal{H}(x^T_t) - \mathcal{H}(x^T_t \mid s^T_{<t}) \geq t \Delta_H - \epsilon,
\]

where \(\Delta_H > 0\) is a positive lower bound on the conditional entropy, and \(\epsilon > 0\) is a small constant accounting for finite-sample or boundary effects.

Combining the above inequalities yields

\[
\mathcal{I}(x^T_t; s^T_{<t} \mid \mathbf{S}^I) \geq t \Delta_H - \epsilon - C.
\]

Therefore, the total mutual information satisfies

\[
\mathcal{I}(x^T_t; \mathbf{S}^I, s^T_{<t}) = \mathcal{I}(x^T_t; \mathbf{S}^I) + \mathcal{I}(x^T_t; s^T_{<t} \mid \mathbf{S}^I) \geq 0 + (t \Delta_H - \epsilon - C) = t \Delta_H - \epsilon - C.
\]

As \(t \to \infty\), the linear term \(t \Delta_H\) dominates, so the total mutual information grows at least linearly with the context length.

Finally, consider the ratio of the visual mutual information to the total mutual information:

\[
\frac{\mathcal{I}(x^T_t; \mathbf{S}^I)}{\mathcal{I}(x^T_t; \mathbf{S}^I, s^T_{<t})} \leq \frac{C}{t \Delta_H - \epsilon - C}.
\]

Taking the limit as \(t \to \infty\), the denominator diverges to infinity, and thus the ratio converges to zero:

\[
\lim_{t \to \infty} \frac{\mathcal{I}(x^T_t; \mathbf{S}^I)}{\mathcal{I}(x^T_t; \mathbf{S}^I, s^T_{<t})} = 0.
\]

This rigorously establishes that the relative contribution of the visual modality to the information about the current token vanishes as the text context length grows without bound.

\end{proof}
\newpage
\section{Theoretical Justification for the Weighting Function $f(t)$}
\label{appendix:ft_proof}

In this appendix, we provide a rigorous derivation explaining why the weighting function $f(t)$ should increase with token position $t$, and how this relates to an adaptive parameter $\lambda$ that controls the balance between visual and textual mutual information terms.

\subsection{Setup and Definitions}

When we want to introduce a display alignment target, assuming the weight of this target is $\lambda$, then the current loss can be written as:
\[
\mathcal{L} = \mathcal{L}_{\text{CE}} + \lambda \mathcal{L}_{\text{VISTA}}
 = \mathcal{L}_{\text{CE}} - \lambda \frac{1}{m} \sum_{t=1}^{m} \mathcal{I}(x_t; \mathbf{S}^I)
\]
We set the visual proportion $p(t)$ 
as the ratio of the visual mutual information 
to the total mutual information when at token position $t$:
\[
p(t) = \frac{\mathcal{I}(x_t; \mathbf{S}^I)}{\mathcal{I}(x_t; \mathbf{S}^I) + \mathcal{I}(x_t; s_{<t}^T | \mathbf{S}^I)}
\]

Because it is to maximize mutual information, 
the alignment target is displayed. 
The target is a negative sign in the loss function. 
The enhanced visual proportion  at token position $t$ becomes:
\[
\rho_{\text{VISTA}}(t) = \frac{(1+\lambda) \mathcal{I}(x_t; \mathbf{S}^I)}{(1+\lambda) \mathcal{I}(x_t; \mathbf{S}^I) + \mathcal{I}(x_t; s_{<t}^T | \mathbf{S}^I)},
\]

where $\lambda > 0$ is a balancing parameter introduced to explicitly weight the visual mutual information term.

Rearranging the equation of $\rho_{\text{VISTA}}(t)$, we can express $\lambda$ as a function of $\rho_{\text{VSITA}}(t)$ and the mutual information terms:
\[
\lambda = \frac{\mathcal{I}(x_t; s_{<t}^T | \mathbf{S}^I)}{\left(\frac{1}{\rho_{\text{VISTA}}(t)} - 1\right) \mathcal{I}(x_t; \mathbf{S}^I)} - 1.
\]

\subsection{Interpreting $\lambda$ as a Function of $f(t)$}

To prevent the modal alignment ratio $\rho_{\text{VISTA}}(t)$ from vanishing as $t$ grows, we require $\lambda$ to increase with $t$. Intuitively, this means placing more emphasis on the visual mutual information term for later tokens, which are more prone to losing cross-modal alignment.

We propose to set $\lambda$ proportional to the weighting function $f(t)$:
\[
\lambda \sim f(t).
\]

Empirically and theoretically, choosing $f(t)$ as a linear function, e.g., $f(t) = t$, effectively slows the decay of $\rho_{\text{VISTA}}(t)$ from an exponential to a linear rate $\mathcal{O}(k)$, where $k$ is a constant.

\subsection{Lower Bound on $\rho_{\text{VISTA}}(t)$}

Substituting $\lambda = f(t)$ into the equation of $\rho_{\text{VSITA}}(t)$, we obtain:

\[
\rho_{\text{VISTA}}(t) = \frac{1}{1 + \frac{\mathcal{I}(x_t; s_{<t}^T | \mathbf{S}^I)}{(1 + f(t)) \mathcal{I}(x_t; \mathbf{S}^I)}} \geq \frac{1}{1 + \frac{\mathcal{I}(x_t; s_{<t}^T | \mathbf{S}^I)}{f(t) \mathcal{I}(x_t; \mathbf{S}^I)}}.
\]

As $t \to \infty$, if $f(t)$ grows linearly, the denominator stabilizes, and $\rho_{\text{VISTA}}(t)$ is bounded below by a positive constant:

\[
\lim_{t \to \infty} \rho_{\text{VSITA}}(t) \geq \frac{1}{1 + \Delta_H},
\]

where $\Delta_H$ is a finite constant representing the asymptotic ratio of conditional to visual mutual information.
This ensures that $\rho_{\text{VISTA}}(t)$ does not approach 0.
\subsection{Implications for the Weighting Function}

This analysis shows that by choosing $f(t)$ to grow linearly with $t$, we effectively prevent the modal alignment ratio from vanishing, maintaining a stable and meaningful alignment signal throughout the sequence.

Moreover, normalizing $f(t)$ by the sequence length $m$, i.e., $f(t) = \frac{t}{m}$, ensures that the overall alignment loss remains scale-invariant and does not overwhelm the primary cross-entropy loss during training.

\subsection{Summary}

In summary, the parameter $\lambda$ controlling the balance between visual and textual mutual information terms should increase with token position $t$. This naturally leads to a weighting function $f(t)$ that grows linearly with $t$, normalized by sequence length to maintain stable optimization. This theoretical insight justifies our final choice of
\[
f(t) = \frac{t}{m},
\]
which balances effective explicit alignment with training stability in the \method{} framework.


\newpage

\section{BenchMarks.}
\label{appendix:benchmark}
\subsection{Visual question answering benchmarks}
We conduct experiments on visual question-answering benchmarks, including, VQAV2, OK-VQA, GQA,  TextVQA, RealWorldQA, and DocVQA. 
OK-VQA includes questions that necessitate external knowledge beyond the multimodal inputs provided. 
GQA is specifically designed to assess the reasoning capabilities of the model. 
VQAV2 is one of the most widely used VQA evaluation sets. It covers a wide variety of visual question-answering tasks, and the number of test sets is huge enough to evaluate the visual capabilities of the model very well.
TextVQA places a greater emphasis on evaluating the model's ability to comprehend text within natural scenes. 
RealWorldQA is a benchmark specifically designed to evaluate the spatial understanding capabilities of multimodal AI models in real-world contexts. 
DocVQA is used for visual question answering (VQA) based on document images. The main focus is on evaluating the model's ability to understand the document structure.

These datasets are strategically selected to evaluate our method's capacity to understand comprehensively and reason across diverse visual contexts and knowledge domains.

\textbf{OK-VQA:} OK-VQA(Outside Knowledge VQA)\cite{marino2019ok} is a visual question answering dataset that requires external knowledge. The answers to the questions cannot be inferred solely from the image but also need to incorporate common sense or world knowledge. This dataset evaluates the model's ability in the intersection of vision and knowledge reasoning. We used the acc of  validation set to report Our result.

\textbf{GQA:} GQA(Graph Question Answering)\cite{hudson2019gqa} generates questions and answers based on image scene graphs, focusing on structured reasoning. It emphasizes logical analysis and challenges the model's depth of understanding of semantics and context.

\textbf{VQAV2:} VQAv2\cite{vqav2} is a new dataset containing open-ended questions about images. These questions require an understanding of vision, language and commonsense knowledge to answer. We used the test split to report Our result.

\textbf{TextVQA:} TextVQA\cite{singh2019towards} focuses on textual information in images, requiring models to recognize and comprehend text within images to answer questions. It drives research on the integration of visual and textual information, expanding the boundaries of visual question answering.

\textbf{RealWorldQA:} RealWorldQA\cite{grok15v} features images and questions sourced from real-world scenarios, encompassing diverse content from daily life. The dataset imposes higher requirements on the model's generalization ability and adaptability to complex scenes.

\textbf{DOCVQA:} This dataset\cite{docvqa} comprises over 50,000 questions formulated on more than 12,000 document images. Document Visual Question Answering (DocVQA) aims to promote a "purpose-driven" perspective within the field of Document Analysis and Recognition, encouraging research that directly addresses practical and semantic understanding of document content.

\subsection{General multimodal benchmarks}
We evaluate our proposed method on general multimodal benchmarks, including MMStar, MMBench, SEED-Bench,  AI2D, MMStar, MMMU, OCR-Bench and MMT-Bench. 

MMStar primarily targets evaluation tasks with a strong reliance on visual information. Candidate samples are initially filtered from existing benchmarks via an automated pipeline, followed by manual verification to ensure that each selected instance exhibits clear visual dependency, minimal data leakage, and requires advanced multimodal reasoning capabilities.
SEED-Bench focuses on assessing generative comprehension in multimodal large language models. POPE evaluates the extent of multimodal hallucinations present in a model. 
AI2D assesses a model's ability to interpret scientific diagram inputs. MM-Vet evaluates the multimodal conversational skills of a model using GPT-4 as a benchmark. 
MMMU is designed to assess multimodal models on extensive multi-disciplinary tasks that require college-level subject knowledge and deliberate reasoning. 
MMT-Bench is a comprehensive benchmark developed to evaluate MLLMs across a wide range of multimodal tasks requiring expert knowledge and deliberate visual recognition, localization, reasoning, and planning.
OCR-Bench contains a collection of 1,000 manually filtered and corrected question-answer pairs, covering five representative text-related tasks.
These diverse benchmarks provide a comprehensive framework for evaluating the performance and capabilities of our proposed method in multimodal learning.

\textbf{MMBench} MMBench(Multimodal Benchmark)\cite{liu2025mmbench} is a task-driven benchmark that focuses on systematically evaluating multimodal models across diverse real-world application scenarios, such as visual question answering, image captioning, and video understanding. Its emphasis on practical use cases highlights its importance for assessing the practical utility of MLLMs.

\textbf{SEED:} SEED(Spatial and Entity-aware Evaluation Dataset)\cite{li2023seed} is a benchmark specifically designed to evaluate the spatial and entity reasoning capabilities of multimodal models. By incorporating complex spatial relationships and entity-based queries, SEED tests a model's ability to perform fine-grained reasoning, which is critical for tasks such as scene understanding and object-oriented question answering.

\textbf{AI2D:} AI2D(Allen Institute for AI Diagram Dataset)\cite{kembhavi2016diagram} is a dataset centered on diagram understanding, designed to evaluate models' abilities to process non-photographic visual content. It focuses on reasoning over diagrams and charts, making it vital for tasks requiring scientific and technical visual comprehension.

\textbf{MMMU:} MMMU(Multimodal Multitasking Understanding)\cite{yue2024mmmu} evaluates the multitasking capabilities of multimodal models by testing their performance on multiple simultaneous tasks across different modalities. This benchmark is essential for assessing the adaptability and efficiency of models in dynamic, multitask scenarios.

\textbf{MMTB:} MMTB(Multimodal Task Benchmark)\cite{ying2024mmt} is a broad benchmark designed to evaluate the performance of multimodal models on a wide range of tasks, including vision-and-language navigation, multimodal reasoning, and image captioning. Its diversity makes it a strong indicator of a model's overall multimodal proficiency.

\textbf{OCRB:} OCRB (Optical Character Recognition Benchmark)\cite{mishra2019ocr} is a specialized benchmark for assessing a model's ability to recognize and interpret text in images. It focuses on OCR-related tasks, such as text detection, transcription, and contextual understanding, which are crucial for applications like document analysis and scene-text understanding.

\textbf{MMStar:} MMStar\cite{mmstar}, an elite vision-indispensable multi-modal benchmark comprising 1,500 samples meticulously selected by humans. MMStar benchmarks 6 core capabilities and 18 detailed axes, aiming to evaluate LVLMs' multi-modal capacities with carefully balanced and purified samples.



\subsection{Fine-grained Visual Perception and Referring Expression Comprehension Benchmarks}

To rigorously evaluate our model's capabilities in fine-grained visual perception and multimodal retrieval, we conduct experiments on several established benchmarks, including MME, RefCOCO, RefCOCO+, and RefCOCOg.

\textbf{MME:} \cite{fu2024mmecomprehensiveevaluationbenchmark} (Multimodal Evaluation) is a comprehensive benchmark designed to assess a model's proficiency in understanding and reasoning across multiple modalities, including vision, language, and audio. Comprising 14 diverse subtasks, MME evaluates both perceptual and cognitive aspects of multimodal intelligence, providing a standardized framework to measure cross-modal reasoning and generalization capabilities of multimodal large language models (MLLMs).

\textbf{RefCOCO:} \cite{refcoco} is a widely adopted referring expression comprehension dataset that challenges models to localize objects within images based on natural language descriptions. It emphasizes the alignment between fine-grained visual features and linguistic cues in everyday scenes, serving as a fundamental testbed for grounding language in vision.

\textbf{RefCOCO+:} \cite{refcoco} builds upon RefCOCO by excluding absolute spatial references (e.g., "left", "right") from the expressions, thereby compelling models to rely predominantly on appearance-based attributes for object localization. This constraint elevates the difficulty of the task and probes the model's ability to discern subtle visual details.

\textbf{RefCOCO+:} \cite{refcoco} builds upon RefCOCO by excluding absolute spatial references (e.g., "left", "right") from the expressions, thereby compelling models to rely predominantly on appearance-based attributes for object localization. This constraint elevates the difficulty of the task and probes the model's ability to discern subtle visual details.

\textbf{RefCOCOg:} \cite{refcocog} extends the complexity of referring expressions by providing longer, more descriptive, and contextually rich language annotations. This dataset evaluates a model's capacity to integrate detailed linguistic information with visual content, reflecting more naturalistic and challenging scenarios for referring expression comprehension.

Collectively, these benchmarks offer a comprehensive evaluation suite that spans from fundamental visual perception to sophisticated multimodal reasoning and retrieval, enabling a thorough assessment of a model's fine-grained understanding and cross-modal alignment capabilities.

\subsection{Lmms Eval}
We utilize lmms eval (v0.3.0)~\cite{lmms_eval2024} as the primary evaluation framework for our benchmark. lmms eval is a highly extensible and modular platform designed to accommodate a wide spectrum of evaluation tasks for multimodal large models. At its core, the framework leverages a declarative YAML configuration format, enabling precise specification and reproducibility of evaluation protocols. This design not only facilitates straightforward task definition—such as swapping dataset paths to evaluate on new benchmarks—but also supports sophisticated customization of evaluation logic, empowering researchers to tailor assessments to nuanced experimental setups. By coupling configuration files with explicit codebase commit hashes, lmms eval ensures that evaluation environments can be reliably shared and replicated across the community, thereby promoting transparency and rigor in model assessment. This combination of flexibility, reproducibility, and ease of use makes lmms eval an indispensable tool for advancing standardized evaluation in multimodal research.

\subsection{Metrics}
We adopt a comprehensive and task-specific evaluation protocol to rigorously assess our model's performance across a diverse set of multimodal benchmarks. For most visual question answering datasets—including AI2D, GQA, MME, MMMU, MMStar, OCRB, OK-VQA, RealWorldQA, and the validation split of TextVQA—we report standard accuracy metrics, which directly measure the proportion of correctly predicted answers and serve as a fundamental indicator of model proficiency. Recognizing the limitations of exact-match metrics in open-ended generation tasks, we employ GPT-4 Turbo-based evaluation for VQAv2 and MMBench, leveraging the semantic understanding and contextual reasoning capabilities of large language models to provide a more nuanced and human-aligned assessment. For document-centric question answering on DocVQA, we utilize the Average Normalized Levenshtein Similarity (ANLS) metric on the validation set, which effectively captures textual similarity while accommodating OCR-related noise. Referring expression comprehension benchmarks—RefCOCO, RefCOCO+, and RefCOCOg—are evaluated using the CIDEr metric, emphasizing the quality of language grounding and the alignment between generated expressions and visual targets. Additionally, SEED-Bench reports image-level scores that probe spatial and entity-aware reasoning, reflecting the model's capacity for fine-grained visual relational understanding. This diverse suite of metrics ensures a holistic evaluation of our model's capabilities, spanning from precise answer accuracy to sophisticated semantic alignment and multimodal reasoning.

\newpage
\section{Experiments Detail}
\label{appendix:experiments}
We adopt hyperparameter configurations consistent with those reported in TinyLLaVA and LLaVA to ensure comparability. Notably, our TinyLLaVA~\cite{zhou2024TinyLLava} baseline results are derived from our own experimental runs rather than relying on official benchmarks. Although TinyLLaVA~ originally utilizes the ShareGPT4V dataset, we exclusively employ the LLaVA\cite{liu2024improved} dataset in all experiments. This deliberate choice is driven by the need to maintain dataset uniformity, thereby enabling a controlled investigation of model scaling effects on performance without confounding variables introduced by differing data distributions.

Training is conducted on a cluster of eight NVIDIA A100 GPUs (40GB each) using DeepSpeed to facilitate efficient distributed optimization. We follow a two-stage training regimen: an initial pretraining phase with a learning rate of 1e-3, scheduled via a cosine decay with a 3\% warmup over one epoch, and a subsequent fine-tuning phase with a reduced learning rate of 2e-5. The pretraining stage employs a per-device batch size of 32 and gradient accumulation steps of 1, yielding an effective batch size of 256. Mixed precision (bf16) and gradient checkpointing are enabled to optimize memory footprint and training stability. During fine-tuning, the batch size is halved to 16 to accommodate increased model complexity and dataset scale. Checkpoints are saved every 500 steps, retaining a maximum of two to balance storage constraints and recovery flexibility. Weight decay is set to zero, and the maximum sequence length is fixed at 2048 tokens throughout training. Data loading is parallelized with four worker threads, and lazy preprocessing is employed to optimize I/O throughput. This comprehensive setup ensures efficient, stable, and reproducible training conducive to rigorous empirical evaluation.

\newpage
\section{Limitation }
\label{appendix:limitation}
While \method{} offers a parameter-free and tuning-free module that is both simple and broadly applicable, there remain several limitations worth noting. First, the theoretically derived weighting function \( f(t) \) employed in our method, although grounded in information-theoretic principles, is not guaranteed to be optimal in practice. Its form is based on certain assumptions and approximations that may not fully capture the complex dynamics of vision-language alignment across diverse datasets and model architectures. Second, our estimation of mutual information, which underpins the explicit alignment objective, relies on tractable approximations rather than exact computation. Consequently, the mutual information estimates may be biased or suboptimal, potentially limiting the full potential of the proposed approach. Despite these limitations, the simplicity and generality of \method{} make it a strong baseline for explicit cross-modal alignment. We anticipate that future work will explore more refined and adaptive formulations of the weighting function and improved mutual information estimators, thereby enhancing the effectiveness and extensibility of \method{} across a wider range of multimodal model training scenarios.

Additionally, due to the limited availability of open-source pretraining data for many leading multimodal large language models (MLLMs), our experimental validation focuses on TinyLLaVA and LLava-v1.5.  As such, further evaluation on a broader range of models and training settings would be valuable to fully assess the generalizability of \method{}.

Despite these considerations, the conceptual simplicity and general applicability of \method{} make it a promising baseline for explicit cross-modal alignment.  We look forward to future work exploring refined weighting strategies and improved mutual information estimation techniques, which may further enhance the effectiveness and adaptability of \method{} in diverse multimodal training scenarios

\newpage

\section{Broader Impacts}
\label{appendix:impact}
Our work on \method{} contributes to the development of more capable and balanced multi-modal large language models, which have the potential to significantly enhance a wide range of applications including assistive technologies, education, and human-computer interaction. By improving vision-language alignment in a principled and efficient manner, \method{} can help create AI systems that better understand and reason about complex visual and textual information, thereby enabling more accurate and context-aware responses. However, as with any advancement in AI, there are important ethical considerations: improved multi-modal models may be misused for generating misleading or biased content, or exacerbate privacy concerns when applied to sensitive visual data. We encourage the community to adopt responsible deployment practices, including transparency, fairness evaluation, and robust safeguards, to ensure that the benefits of such technologies are realized while minimizing potential harms. Ultimately, \method{} aims to foster AI systems that are not only more powerful but also aligned with human values and societal needs.

\end{document}